\definecolor{Gray}{gray}{0.9}
\newcommand{\thickhline}{%
    \noalign {\ifnum 0=`}\fi \hrule height 1pt
    \futurelet \reserved@a \@xhline
}
\newcolumntype{"}{@{\hskip\tabcolsep\vrule width 1pt\hskip\tabcolsep}}
\newtheorem{prop}{Proposition}
\title{Simplicial convolutional neural networks}
\name{Maosheng Yang, Elvin Isufi and Geert Leus\thanks{Faculty of Electrical Engineering, Mathematics and Computer Science, Delft University of Technology, Delft, The Netherlands. This work is supported by the TU Delft AI Labs Programme. e-mail: {\{m.yang-2, e.isufi, g.j.t.leus\}@tudelft.nl}.}}
\address{}
\begin{document}
%
\maketitle
\begin{abstract}
Graphs can model networked data by representing them as nodes and their pairwise relationships as edges. Recently, signal processing and neural networks have been extended to process and learn from data on graphs, with achievements in tasks like graph signal reconstruction, graph or node classifications, and link prediction. However, these methods are only suitable for data defined on the nodes of a graph. In this paper, we propose a simplicial convolutional neural network (SCNN) architecture to learn from data defined on simplices, e.g., nodes, edges, triangles, etc. We study the SCNN permutation and orientation equivariance, complexity, and spectral analysis. Finally, we test the SCNN performance for imputing citations on a coauthorship complex. 
\end{abstract}
\begin{keywords}
Simplicial complex, Hodge Laplacian, simplicial filter, simplicial neural network.
\end{keywords}

\section{Introduction}
Graphs are powerful models to represent irregular data by encoding their pairwise relationships. To process such networked data, signal processing concepts have been extended to the graph domain, defining, for instance, the graph Fourier transform and graph filters \cite{ortega2018graph}. Meanwhile, graph neural networks (GNNs) have achieved a good performance in tasks like rating prediction in recommender systems, graph or node classification and link prediction \cite{gama2020, isufi2021accuracy, kipf2016semi,isufi2021edgenets,wu2020comprehensive}. 

However, graph signal processing and GNNs are designed for data defined on nodes of a graph. In real-world problems, we might also have data defined on edges, triangles, etc, of a network, such as communication or traffic flow in a data or road network, paper citations of a coauthorship network and so on \cite{barbarossa2020, ebli2020simplicial,leus2021topological}. To model data defined on such higher-order network structures, we can use a simplicial complex, which is a collection of simplices, i.e., nodes, edges, triangles, etc. Recently, signal processing and neural networks on simplicial complexes have emerged. In \cite{barbarossa2020}, simplicial data as well as the simplicial Fourier transform have been defined. The authors of \cite{schaub2018flow, schaub2021} provided an overview of some simplicial signal processing techniques to process flow-typed data. Our previous work \cite{yang2021finite} analyzed the definition of frequencies for simplicial signals and proposed two types of simplicial filters based on the Hodge Laplacian. 

Meanwhile, researchers have also attempted to develop neural networks on simplicial complexes. In \cite{ebli2020simplicial}, a basic simplicial neural network (SNN) was proposed with a convolutional layer composed of a basic simplicial filter \cite{yang2021finite} and a nonlinearity. Message passing neural networks (MPNNs) have been generalized to simplicial complexes in \cite{bodnar2021weisfeiler} where the aggregation and updating functions consider in addition to the edge data also data defined on adjacent simplices, i.e., nodes and triangles. The neural network architectures in \cite{bunch2020simplicial,roddenberry2021principled} are instances of \cite{bodnar2021weisfeiler} by specifying the aggregation functions as simplicial filters. Another attempt in \cite{roddenberry2019hodgenet} considered recurrent architectures in MPNNs for flow interpolation and graph classification tasks. 

Motivated by the principle of the convolution operator, in this paper we propose simplicial convolutional neural networks (SCNNs). Differently from the earlier approach in \cite{ebli2020simplicial}, we build an SCNN with simplicial filters of higher flexibility in exploiting the lower- and upper-neighbors of a simplex. And differently from the MPNN, the proposed SCNN considers multihop information exchange within a layer and enjoys spectral interpretability via the simplicial Fourier transform. Our specific contributions are:
\begin{enumerate*}[label=\roman*)]
    \item we propose an inductive SCNN based on the more advanced simplicial filter of \cite{yang2021finite} and discuss its connections to the related work; 
    \item we analyze its permutation and orientation equivariances as well as characterize the SCNN in the spectral domain;
    \item we test the performance on citation data imputation in a coauthorship complex outperforming the state-of-the-art. 
\end{enumerate*}


\section{Simplicial signal processing} \label{sec:background}
In this section, we recall some important simplicial signal processing concepts, including simplicial complexes and signals, the Hodge decomposition, and simplicial filters.  

\vspace{1mm}\noindent\textbf{Simplicial complexes and signals.}
Given a finite set of vertices $\ccalV$, a $k$-simplex $\ccalS^k$ is a subset of $\ccalV$ with cardinality $k+1$. 
A \emph{face} of $\ccalS^k$ is a subset with cardinality $k$ and thus a $k$-simplex has $k+1$ faces. A \emph{coface} of $\ccalS^k$ is a $(k+1)$-simplex that includes $\ccalS^k$ \cite{barbarossa2020, lim2015hodge}. A simplicial complex of order $K$, $\ccalX^K$, is a collection of $k$-simplices $\ccalS^k$, $k=0,\dots,K$, with an inclusion property--for any $\ccalS^k\in\ccalX^K$, then $\ccalS^{k-1}\in\ccalX^K$ if $\ccalS^{k-1}\subset\ccalS^k$. We denote the number of $k$-simplices in $\ccalX^K$ by $N_k$. If two simplices share a common face, then they are lower neighbours; if they share a common coface, they are upper neighbours \cite{yang2021finite}.
A graph is a simplicial complex where nodes are 0-simplices, and edges are 1-simplices. 

In a simplicial complex, we define a $k$-simplicial signal $\bbx^k = [x^k_1,\dots,x^k_{N_k}]$ as a mapping from the $k$-simplices to the real space $\setR^{N_k}$.
For example, $\setR^{N_0}$ is the graph signal space in GSP, and $\setR^{N_1}$ is the space of edge flows. 
For an edge flow $\bbx^1\in\setR^{N_1}$, the sign of its entry denotes the direction of the flow relative to a chosen reference orientation \cite{schaub2021,lim2015hodge}. 

\vspace{1mm}\noindent\textbf{Hodge Laplacian and decomposition.} 
We represent the relations between $(k-1)$- and $k$-simplices with the incidence matrix $\bbB_{k}$, $k=1,\dots,K$. The rows of $\bbB_k$ are indexed by $(k-1)$-simplices and the columns by $k$-simplices. E.g., matrix $\bbB_1$ is the node-to-edge incidence matrix, and $\bbB_2$ is the edge-to-triangle incidence matrix \cite{schaub2021, yang2021finite}.

We can also use the Hodge Laplacian matrices, $\bbL_k = \bbB_k^\top\bbB_k + \bbB_{k+1}\bbB_{k+1}^\top$, where  $\bbL_{k,\rm{l}}\triangleq \bbB_k^\top \bbB_k$ and $\bbL_{k,\rm{u}}\triangleq \bbB_{k+1} \bbB_{k+1}^\top$ are the lower and the upper Laplacians, which encode lower and upper neighbourhoods, respectively. When $k=0$, the Hodge Laplacian is the graph Laplacian $\bbL_0 = \bbB_1 \bbB_1^\top$. 
Hodge Laplacians admit a \emph{Hodge decomposition}, leading to three orthogonal subspaces which the simplicial signal space can be decomposed into, i.e., 
$ \setR^{N_k} = \im(\bbB_k^\top) \oplus \im(\bbB_{k+1}) \oplus \ker(\bbL_k)$, where $\oplus$ is the direct sum of vector spaces and $\im(\cdot)$ and $\ker(\cdot)$ are the \emph{image} and \emph{kernel} of a matrix. For $k=1$, these subspaces carry the following interpretations \cite{barbarossa2020,schaub2021}.

\smallskip{\it Gradient space}. 
By applying matrix $\bbB_1$ to an edge flow $\bbx^1$, we compute its net flow at each node, $\bbB_1 \bbx^1$. The incidence matrix $\bbB_1$ is called a \emph{divergence operator}. Its adjoint $\bbB_1^\top$ differentiates a node signal $\bbx^0$ along the edges to induce an edge flow $\bbB_1^\top\bbx^0$, i.e., it is the \emph{gradient operator}. As a result, any flow within $\im(\bbB_1^\top)$ can be written as the gradient of a node signal $\bbx^0$, i.e., $\bbx^1 = \bbB_1^\top\bbx^0$. We call $\bbx^1 \in \im(\bbB_1^\top)$ a \emph{gradient flow} and the space $\im(\bbB_1^\top)$ the \textit{gradient space}. 

\smallskip{\it Curl space}. 
We can induce an edge flow from a triangle signal $\bbx^2$ as $\bbx^1 = \bbB_2\bbx^2$. The induced flow $\bbx^1 \in \im(\bbB_2)$ is called a \emph{curl flow} and the space $\im(\bbB_2)$ is the \textit{curl space}. The adjoint $\bbB_2^\top$ is the {\it curl operator}. We can use it to compute the net edge flow of $\bbx^1$ circulating along the triangles as $\bbB_2^\top\bbx^1$. 

\smallskip{\it Harmonic space}. 
The remaining space $\ker(\bbL_1)$ is the {\it harmonic space}. Any edge flow $\bbx^1\in\ker(\bbL_1)$ has zero divergence and curl, i.e., it is \emph{divergence-} and \emph{curl-free}. 

\smallskip Due to the boundary condition $\bbB_1 \bbB_2 = \mathbf{0}$, any gradient flow $\bbx^1\in\im(\bbB_1^\top)$ is curl-free. 
The space orthogonal to the gradient space, i.e., $\ker(\bbB_1) = \im(\bbB_2) \oplus \ker(\bbL_1)$, is called the \textit{cycle space}, which consists of both the curl space and harmonic space. Any flow in this space is divergence-free.

\vspace{1mm}\noindent\textbf{Simplicial filters.} To process simplicial signals $\bbx^1$, we use a simplicial convolutional filter of the following form:
\begin{equation} \label{eq.simplicial-filter}
    \bbH  = \epsilon\bbI + \sum_{l_1=1}^{L_1} \alpha_{l_1} (\bbB_1^\top\bbB_1)^{l_1} + \sum_{l_2=1}^{L_2} \beta_{l_2} (\bbB_2 \bbB_2^\top)^{l_2}
\end{equation}
where $\epsilon$, $\balpha = [\alpha_1,\dots,\alpha_{L_1}]^\top$ and $\bbeta = [\beta_1,\dots,\beta_{L_2}]^\top$ are the filter coefficients \cite{yang2021finite}. For ease of exposition, we only discuss the simplicial filter form \eqref{eq.simplicial-filter} for the edge signal space $\setR^{N_1}$, but similar discussions apply to general simplicial filters with $\bbB_k$ and $\bbB_{k+1}$. 

Applying $\bbH$ to an input edge flow $\bbx^1$ consists of the \emph{simplicial shifting} operations, $\bbL_{1,\rm{l}}\bbx^1$ and $\bbL_{1,\rm{u}}\bbx^1$. The $i$th entry of $\bbL_{1,\rm{l}}\bbx^1$ is
$
    [\bbL_{1,\rm{l}}\bbx^1]_i = \sum_{j\in\{\ccalN_{{\rm l},i} \cup \;i\}} [\bbL_{1,{\rm l}}]_{ij}[\bbx^1]_j
$,
which is a local operation within the lower neighborhood of the $i$th edge, likewise for $\bbL_{1,\rm{u}}\bbx^1$. Moreover, powers $\bbL_{1,\rm{l}}^k \bbx^1 = \bbL_{1,\rm{l}} (\bbL_{1,\rm{l}}^{k-1} \bbx^1)$ can be recursively obtained by applying the local operation $k$ times \cite{yang2021finite}. This leads to a distributed implementation of simplicial filtering with a complexity of order $\ccalO(N_1 D)$ for each shifting with $D$ being the maximal number of neighbours.

As we can observe from \eqref{eq.simplicial-filter}, in the simplicial domain, different sets of coefficients on $\bbL_{1,\rm{l}}$ and $\bbL_{1,\rm{u}}$ enable an independent and flexible filtering within the lower and upper simplicial neighbourhoods. 
When $L_1 = L_2$ and $\balpha = \bbeta$, filter $\bbH$ [cf.~\eqref{eq.simplicial-filter}] reduces to the basic form $\bbH = \sum_{l=0}^{L} h_{l} \bbL_1^l$ at the cost of losing expressive power and flexibility \cite{yang2021finite}. 

\section{Simplicial convolutional neural networks}
Upon having a simplicial convolutional filter \eqref{eq.simplicial-filter}, we can build an SCNN by composing filter banks with elementwise nonlinearities. This SCNN applies to any $k$-simplicial signal but we again illustrate it for edge signals $\bbx^1$ for the ease of intuition, and omit the superscript to avoid overcrowded notation.

Consider a $P$-layer SCNN. In the first layer $p=1$, we apply $F$ filters $\bbH_1^f$ [cf. \eqref{eq.simplicial-filter}] and an elementwise nonlinearity $\sigma(\cdot)$ to the input $\bbx_0$ to get a collection of $F$ features $\bbx_1^f$ as
\begin{equation} \label{eq.scnn-first-layer}
    \bbx_1^f = \sigma[\bbz_1^f] = \sigma[\bbH_1^f\bbx_0], \quad f = 1,\dots,F
\end{equation}
which constitute the output feature matrix $\bbX_1 = [\bbx_1^1,\dots,\bbx_1^{F}]$. In subsequent intermediate layers $p=2,\dots,P-1$, we have $\bbX_{p-1} = [\bbx_{p-1}^{1},\dots,\bbx_{p-1}^{F}] \in \setR^{N_1\times F}$ as input. Each input signal $\bbx_{p-1}^g, g = 1,\dots,F$ is passed through a bank of filters $\bbH_{p}^{fg}$ to obtain $F$ intermediate outputs $\bbz_p^{fg} = \bbH_{p}^{fg}\bbx_{p-1}^g, f = 1,\dots,F$. To avoid exponential filter growth, the intermediate outputs of the different input signals $\bbx_{p-1}^g$ are summed, thus, the $p$th layer generates $F$ features $\bbx_p^f$ as follows
\begin{equation} \label{eq.scnn-intermediate}
    \bbx_p^f = \sigma \bigg[ \sum_{g=1}^F \bbz_p^{fg} \bigg] = \sigma \bigg[ \sum_{g=1}^F \bbH_{p}^{fg}\bbx_{p-1}^g  \bigg]~ f = 1,\dots,F.
\end{equation}
The processing in \eqref{eq.scnn-intermediate} is repeated until the last layer $p=P$, where we consider the output has a single feature, and hence each input is processed by a single filter $\bbH^g$. Thus, the final output of the SCNN is given by
\begin{equation} \label{eq.scnn-last-layer}
    \bbx_P = \sigma\bigg[ \sum_{g=1}^F \bbz_P^g \bigg] = \sigma \bigg[ \sum_{g=1}^F \bbH_{P}^{g}\bbx_{P-1}^g  \bigg].
\end{equation}

Equations \eqref{eq.scnn-first-layer}, \eqref{eq.scnn-intermediate} and \eqref{eq.scnn-last-layer} constitute the SCNN architecture based on the simplicial filter form defined in \eqref{eq.simplicial-filter}. The lower and upper Hodge Laplacians encode the lower and upper simplicial neighbourhoods, respectively. Simplicial convolutions through filter \eqref{eq.simplicial-filter} are performed independently within the lower and upper neighbourhoods and controlled by different sets of coefficients. As we shall show later on, this means that the gradient and curl components of the input features are convolved independently, leading to more expressive power. Next, we discuss the connections of the SCNN with current alternatives and analyze its properties.

\vspace{1mm}\noindent\textbf{Links with related works.} In \cite{ebli2020simplicial}, a similar convolutional layer was proposed but based on filter $\bbH = \sum_{l=0}^Lh_l\bbL_1^l$. This SNN architecture is a particular case of the proposed SCNN with less expressive power but also with less parameters. 
The message passing neural network (MPNN) for simplicial complexes \cite{bodnar2021weisfeiler} aggregates and updates features from direct simplicial neighbours and simplices of different orders, e.g., nodes and triangles.
%
By considering an order-one simplicial convolution as the message aggregation step, we then obtain the architectures in \cite[Eq. 4]{bunch2020simplicial} and \cite[Eq. 7]{roddenberry2021principled}. When only edge features are available, such approaches are a particular case of the SCNN with order $L_1 = 1$ and $L_2=1$. Recurrent architectures are considered for flow interpolation and graph classification in \cite{roddenberry2019hodgenet}. Compared to these works, the SCNN treats features from the lower and upper neighbours differently and considers features from not only direct neighbours but also for multihop neighbors.

\vspace{1mm}\noindent\textbf{Locality and complexity.}
The intermediate output $\bbz^{fg}_p$ at the $p$th layer collects for each edge information from lower neighbours up to $L_1$ hops away and upper neighbours $L_2$ hops away through filter \eqref{eq.simplicial-filter}.
This locality comes from the structure of the Hodge Laplacian, likewise that of GNNs \cite{gama2020}. 

When only a single feature is available, we have $1+L_1+L_2$ parameters in such layers. For layers with multiple input and output features, the number of parameters grows $F^2$ times. The major complexity comes from the convolutional filtering step, which as seen before it is a weighted linear combination of different shifts of a simplicial signal; a local operation within the simplicial neighbourhoods that can be computed recursively. Hence, an SCNN layer performs the simplicial filtering for each edge with a cost of order $\ccalO((L_1+L_2)D)$. Again, this complexity grows $F^2$ times when multiple features are used and $P$ times if $P$ layers are considered.

\vspace{1mm}\noindent\textbf{Equivariance and invariance.}
In the following, we show that our SCNN is equivalent with respect to a different simplex labeling and different reference flow orientations. Consider the set of simplicial permutation matrices
$$
    \ccalP = \{\bbP_k \in \{0,1\}^{N_k\times N_k}: \bbP_k\bb1 = \bb1, \bbP_k^\top\bb1 = \bb1, k\geq 0 \},
$$
where products $\bbP_k\bbx^k$ permute the $k$-simplicial signal $\bbx^k$. Let $\bbP = (\bbP_0, \bbP_1,\dots)$ denote a sequence of permutations. Then, the following holds.
\begin{prop}
The SCNN is a permutation equivariant architecture. For an input edge flow $\bbx$, the output of an edge space SCNN layer with a simplicial filter $\bbH$, $\bby = \sigma[\bbH\bbx]$, becomes $\bby' = \bbP_1 \bby$ after a permutation sequence $\bbP$. 
\end{prop}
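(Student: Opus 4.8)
The plan is to track how each ingredient of the layer output $\bby = \sigma[\bbH\bbx]$ responds to the relabeling encoded by the permutation sequence $\bbP$, and then to compose these effects. The crux is to first establish how the two shift operators $\bbL_{1,\rm{l}}$ and $\bbL_{1,\rm{u}}$ inside the filter \eqref{eq.simplicial-filter} transform, since everything else follows from elementary properties of permutation matrices and the entrywise nonlinearity. Because a permutation in $\ccalP$ is merely a relabeling of the simplices, the only structural fact I need is how the incidence matrices behave: relabeling the $(k-1)$- and $k$-simplices by $\bbP_{k-1}$ and $\bbP_k$ sends $\bbB_k$, whose rows are indexed by $(k-1)$-simplices and columns by $k$-simplices, to $\bbP_{k-1}\bbB_k\bbP_k^\top$, the simultaneous row/column permutation of $\bbB_k$. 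Specializing to $k=1,2$ gives $\bbB_1\mapsto\bbP_0\bbB_1\bbP_1^\top$ and $\bbB_2\mapsto\bbP_1\bbB_2\bbP_2^\top$.

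Next I would substitute these into $\bbL_{1,\rm{l}} = \bbB_1^\top\bbB_1$ and $\bbL_{1,\rm{u}} = \bbB_2\bbB_2^\top$. Using that each $\bbP_k$ is orthogonal, $\bbP_k^\top\bbP_k = \bbI$, the inner factors $\bbP_0^\top\bbP_0$ and $\bbP_2^\top\bbP_2$ cancel, leaving $\bbL_{1,\rm{l}}\mapsto\bbP_1\bbL_{1,\rm{l}}\bbP_1^\top$ and $\bbL_{1,\rm{u}}\mapsto\bbP_1\bbL_{1,\rm{u}}\bbP_1^\top$. The key point here is that both operators conjugate by the \emph{same} $\bbP_1$, namely the permutation acting on the edge space. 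Taking powers, the interior $\bbP_1^\top\bbP_1$ factors telescope to give $\bbL_{1,\rm{l}}^{l_1}\mapsto\bbP_1\bbL_{1,\rm{l}}^{l_1}\bbP_1^\top$, and likewise for $\bbL_{1,\rm{u}}^{l_2}$. Since $\bbI = \bbP_1\bbI\bbP_1^\top$ as well, summing the terms of \eqref{eq.simplicial-filter} with any fixed coefficients $\epsilon,\balpha,\bbeta$ yields $\bbH\mapsto\bbP_1\bbH\bbP_1^\top$; the filter itself is equivariant.

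The edge input transforms as $\bbx\mapsto\bbP_1\bbx$, so the pre-activation becomes $\bbP_1\bbH\bbP_1^\top\bbP_1\bbx = \bbP_1\bbH\bbx = \bbP_1\bbz$, using $\bbP_1^\top\bbP_1 = \bbI$ once more. Because $\sigma(\cdot)$ acts entrywise, permuting its argument only reorders its output, $\sigma[\bbP_1\bbz] = \bbP_1\sigma[\bbz]$, and therefore $\bby' = \bbP_1\bby$. To extend the claim from a single filter to the full $P$-layer, multi-feature SCNN of \eqref{eq.scnn-first-layer}--\eqref{eq.scnn-last-layer}, I would induct on the layer index $p$: the feature sum $\sum_{g}\bbH_p^{fg}\bbx_{p-1}^g$ carries a common left factor $\bbP_1$ out of the summation, and $\sigma$ again passes through $\bbP_1$, so each layer reproduces the equivariance of the previous one.

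I expect the only genuinely nontrivial step to be the incidence-matrix identity $\bbB_k\mapsto\bbP_{k-1}\bbB_k\bbP_k^\top$, which requires that the permutations of consecutive simplex orders be mutually consistent, so that face and coface relations are preserved under the relabeling. Once this compatibility is granted, the remaining steps reduce to routine cancellations exploiting the orthogonality of permutation matrices and the entrywise action of $\sigma$.
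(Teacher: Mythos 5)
Your proposal is correct and follows essentially the same route as the paper's own proof: transform the incidence matrices as $\bbB_1\mapsto\bbP_0\bbB_1\bbP_1^\top$, $\bbB_2\mapsto\bbP_1\bbB_2\bbP_2^\top$, use orthogonality of permutation matrices so that both Hodge Laplacians conjugate by the same $\bbP_1$, telescope the powers to get $\bbz'=\bbP_1\bbz$, and pass $\bbP_1$ through the elementwise nonlinearity. Your explicit induction over layers and features makes rigorous what the paper compresses into its final sentence, but it is the same argument.
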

\begin{proof}
An SCNN layer with filter $\bbH$ gives the output $\bbz = \bbH\bbx$. After a sequence of permutations $\bbP$, the input edge flow $\bbx$ becomes $\bbP_1 \bbx$ and the boundary operators become $\bbP_0\bbB_1\bbP_1^\top$ and $\bbP_1\bbB_2\bbP_2^\top$. Thus, the Hodge Laplacians become $\bbP_1\bbL_{1,\rm{l}}\bbP_1^\top$ and $\bbP_1\bbL_{1,\rm{u}}\bbP_1^\top$ due to $\bbP_k^\top\bbP_k= \bbI$. Then we can express the permuted intermediate output as\vskip-5mm
\begin{equation*} \label{eq.permutation-equivariant}
\begin{aligned}
    \bbz' &\! =\! \bigg(\! \epsilon\bbI \!+\! \sum_{l_1=1}^{L_1}\! \alpha_{l_1} (\bbP_1\bbL_{1,\rm{l}}\bbP_1^\top)^{l_1} \!+\!\!\! \sum_{l_2=1}^{L_2}\! \beta_{l_2} (\bbP_1\bbL_{1,\rm{u}}\bbP_1^\top)^{l_2}\! \bigg) \bbP_1\bbx \\
    & = \bbP_1 \bigg( \epsilon\bbI + \sum_{l_1=1}^{L_1} \alpha_{l_1} \bbL_{1,\rm{l}}^{l_1} + \sum_{l_2=1}^{L_2} \beta_{l_2} \bbL_{1,\rm{u}}^{l_2} \bigg)\bbx = \bbP_1 \bbz.
\end{aligned}
\end{equation*}
Thus, the simplicial filter $\bbH$ is permutation equivariant. Furthermore, since the nonlinearity $\sigma(\cdot)$ is elementwise, SCNNs are permutation equivariant.
\end{proof}\vskip-2mm

In addition, in a simplicial complex, we have also set an arbitrary reference orientation for a simplex. A new reference orientation can be modelled by multiplying the rows and columns of the boundary matrices $\bbB_{k}$ and $\bbB_{k+1}$ where that $k$-simplex appears and the corresponding simplicial signal value by $-1$. Let then $\bbD_k$ be diagonal matrices from the set\vskip-5mm
$$
\mathcal{D} = \{\bbD_k = \diag(\bbd_k): \bbd_k \in \{\pm 1\}^{N_k}, k \ge 1, \bbd_0 = \mathbf{1}	\},
$$
where $\bbD_k \bbx^k$ is the updated $k$-simplicial signal $\bbx^k$. Let $\bbD = (\bbD_0,\bbD_1,\dots)$ denote a sequence of orientation changes. Then, the following holds.


\begin{prop}
The SCNN is orientation equivariant if the nonlinearity $\sigma(\cdot)$ is odd. Without loss of generality, for an input flow $\bbx$, the output of an edge space SCNN layer with a simplicial filter $\bbH$, $\bby = \sigma[\bbH \bbx]$ becomes $\bby'= \bbD_1\bby$ after a sequence of orientation changes $\bbD$.
\end{prop}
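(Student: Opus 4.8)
The plan is to mirror the permutation-equivariance argument of the previous proposition: first track how the orientation change $\bbD$ acts on the boundary operators and Hodge Laplacians, then push it through the filter $\bbH$, and finally commute it past the elementwise nonlinearity using oddness. The starting observation is that re-orienting a $k$-simplex flips the sign of the corresponding row of $\bbB_k$ and column of $\bbB_{k+1}$; concretely, since $\bbD_0 = \bbI$, the boundary matrices transform as $\bbB_1 \mapsto \bbB_1 \bbD_1$ and $\bbB_2 \mapsto \bbD_1 \bbB_2 \bbD_2$ under a sequence $\bbD$.

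From here I would compute the transformed Laplacians. For the lower Laplacian, $\bbL_{1,\rm{l}} = \bbB_1^\top \bbB_1 \mapsto \bbD_1 \bbB_1^\top \bbB_1 \bbD_1 = \bbD_1 \bbL_{1,\rm{l}} \bbD_1$. For the upper Laplacian the triangle orientations must wash out: $\bbL_{1,\rm{u}} = \bbB_2 \bbB_2^\top \mapsto \bbD_1 \bbB_2 \bbD_2^2 \bbB_2^\top \bbD_1$, and because $\bbD_2^2 = \bbI$ this collapses to $\bbD_1 \bbL_{1,\rm{u}} \bbD_1$. Thus both shift operators are conjugated by the same sign-diagonal $\bbD_1$, and since $\bbD_1^2 = \bbI$ every power of either Laplacian $\bbL$ satisfies $(\bbD_1 \bbL \bbD_1)^l = \bbD_1 \bbL^l \bbD_1$; substituting into \eqref{eq.simplicial-filter} and noting $\epsilon\bbI = \bbD_1(\epsilon\bbI)\bbD_1$ gives $\bbH \mapsto \bbD_1 \bbH \bbD_1$. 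The transformed intermediate output is then $\bbz' = (\bbD_1 \bbH \bbD_1)(\bbD_1 \bbx) = \bbD_1 \bbH \bbx = \bbD_1 \bbz$, using $\bbD_1^2 = \bbI$ once more.

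The remaining step — and the only one where oddness enters — is to move $\bbD_1$ through $\sigma$. Since $\bbD_1$ is diagonal with entries $\pm 1$ and $\sigma$ acts entrywise, the $i$th output is $\sigma([\bbD_1]_{ii} z_i)$; if $[\bbD_1]_{ii} = +1$ this equals $\sigma(z_i)$, while if $[\bbD_1]_{ii} = -1$ oddness gives $\sigma(-z_i) = -\sigma(z_i) = [\bbD_1]_{ii}\sigma(z_i)$. Hence $\sigma(\bbD_1 \bbz) = \bbD_1 \sigma(\bbz)$, so $\bby' = \bbD_1 \bby$, and the argument propagates layer by layer exactly as in the permutation case. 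The main obstacle is not any single computation but making precise the two sign cancellations that the hypotheses buy us: the disappearance of $\bbD_2$ from $\bbL_{1,\rm{u}}$ (via $\bbD_2^2 = \bbI$) and the passage of $\bbD_1$ through the nonlinearity (via oddness). Without the oddness of $\sigma$ the last step fails, which is precisely why it appears as a hypothesis.
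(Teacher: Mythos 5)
Your proposal is correct and follows essentially the same route as the paper's proof: track the transformation of the boundary operators, conclude that both Hodge Laplacians are conjugated by $\bbD_1$, push this through the filter to get $\bbz' = \bbD_1\bbz$, and use oddness of $\sigma(\cdot)$ to commute $\bbD_1$ past the nonlinearity. Your write-up is somewhat more explicit than the paper's (which defers to the permutation argument), notably in spelling out the cancellations $\bbD_2^2 = \bbI$ and $\bbD_1^2 = \bbI$ and the entrywise oddness step, but the underlying argument is identical.
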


\begin{proof}
After an orientation change, the edge flow $\bbx$ becomes $\bbD_1\bbx$ and the boundary operators $\bbB_1$ and $\bbB_2$ are updated as $\bbD_0 \bbB_1 \bbD_1$ and $\bbD_1 \bbB_2 \bbD_2$. Then, the Hodge Laplacians become $\bbD_1 \bbL_{1,\rm{l}} \bbD_1$ and $\bbD_1 \bbL_{1,\rm{u}} \bbD_1$. We can then express the new filter output as $\bbz' = \bbD_1 \bbH \bbx = \bbD_1 \bbz$, following similar steps as in \eqref{eq.permutation-equivariant}. Thus, simplicial filter $\bbH$ is orientation equivariant. When the nonlinearity $\sigma(\cdot)$ is an odd function, we have $\bby' = \sigma(\bbz') = \bbD_1 \sigma(\bbz) = \bbD_1 \bby$ that completes the proof.
\end{proof} 

Permutation and orientation equivariances preserve the output of an SCNN regardless of the choices of the labeling and reference orientation of the edges. In turn, they allow the SCNN to exploit the internal symmetries in the complex. 

\vspace{1mm}\noindent\textbf{Spectral analysis.} For the spectral analysis, consider first that the eigenvectors of the Hodge Laplacian $\bbL_1$ span the three spaces given by the Hodge decomposition: 
(i) the gradient space $\im(\bbB_1^\top)$ is spanned by a set of eigenvectors $\bbU_{\rm{G}}$ of $\bbL_{1,\rm{l}}$ with positive eigenvalues; (ii) the curl space $\im(\bbB_2)$ is spanned by a set of eigenvectors $\bbU_{\rm{C}}$ of $\bbL_{1,\rm{u}}$ with positive eigenvalues; and (iii) the harmonic space $\ker(\bbL_1)$ is spanned by the eigenvectors $\bbU_{\rm{H}}$ of $\bbL_1$ with zero eigenvalue. Moreover, we have $\im(\bbL_1) = \im(\bbU_{\rm{G}}) \oplus \im(\bbU_{\rm{C}})$, i.e., gradient and curl spaces make up the image of $\bbL_1$ \cite{yang2021finite}. Then, we can eigendecompose $\bbL_1$ as $\bbL_1 = \bbU \bLambda \bbU^\top$
where $\bbU = [\bbU_{\rm{H}}\,\, \bbU_{\rm{G}} \,\, \bbU_{\rm{C}}]$ 
provides a simplicial Fourier basis, and $\bLambda = \diag(\bLambda_{\rm{H}}, \bLambda_{\rm{G}}, \bLambda_{\rm{C}})$ with $\bLambda_{\rm{H}} = \diag(\mathbf{0}_{N_H})$, $\bLambda_{\rm{G}} = \diag( \lambda_{\rm{G},1},\dots,\lambda_{{\rm{G}},N_{\rm{G}}})$, and  $\bLambda_{\rm{C}} = \diag(\lambda_{\rm{C},1},\dots,\lambda_{{\rm{C}},N_{\rm{C}}})$ collecting the harmonic, gradient, and curl frequencies, respectively; i.e., the simplicial frequencies \cite{yang2021finite}. 

For an edge flow $\bbx$, we can find its simplicial Fourier transform (SFT) as $\tilde{\bbx} = \bbU^\top \bbx$. This further defines three embeddings $\tilde{\bbx} = 
[\tilde{\bbx}_{\rm{H}}^\top \,\, \tilde{\bbx}_{\rm{G}}^\top \,\, \tilde{\bbx}_{\rm{C}}^\top]$: the harmonic embedding $\tilde{\bbx}_{\rm{H}} = \bbU^\top_{\rm{H}} \bbx$, the gradient embedding $\tilde{\bbx}_{\rm{G}} = \bbU^\top_{\rm{G}} \bbx$, and the curl embedding $\tilde{\bbx}_{\rm{C}} = \bbU^\top_{\rm{C}} \bbx$, which contain the weights of $\bbx$ at harmonic, gradient, and curl frequencies, respectively. 

\begin{figure}[t] 
    \vspace{-8mm}
  \centering
  \subfloat[$10\%$ missing rate. \label{fig:training_loss_10}]{%
       \includegraphics[width=0.48\linewidth]{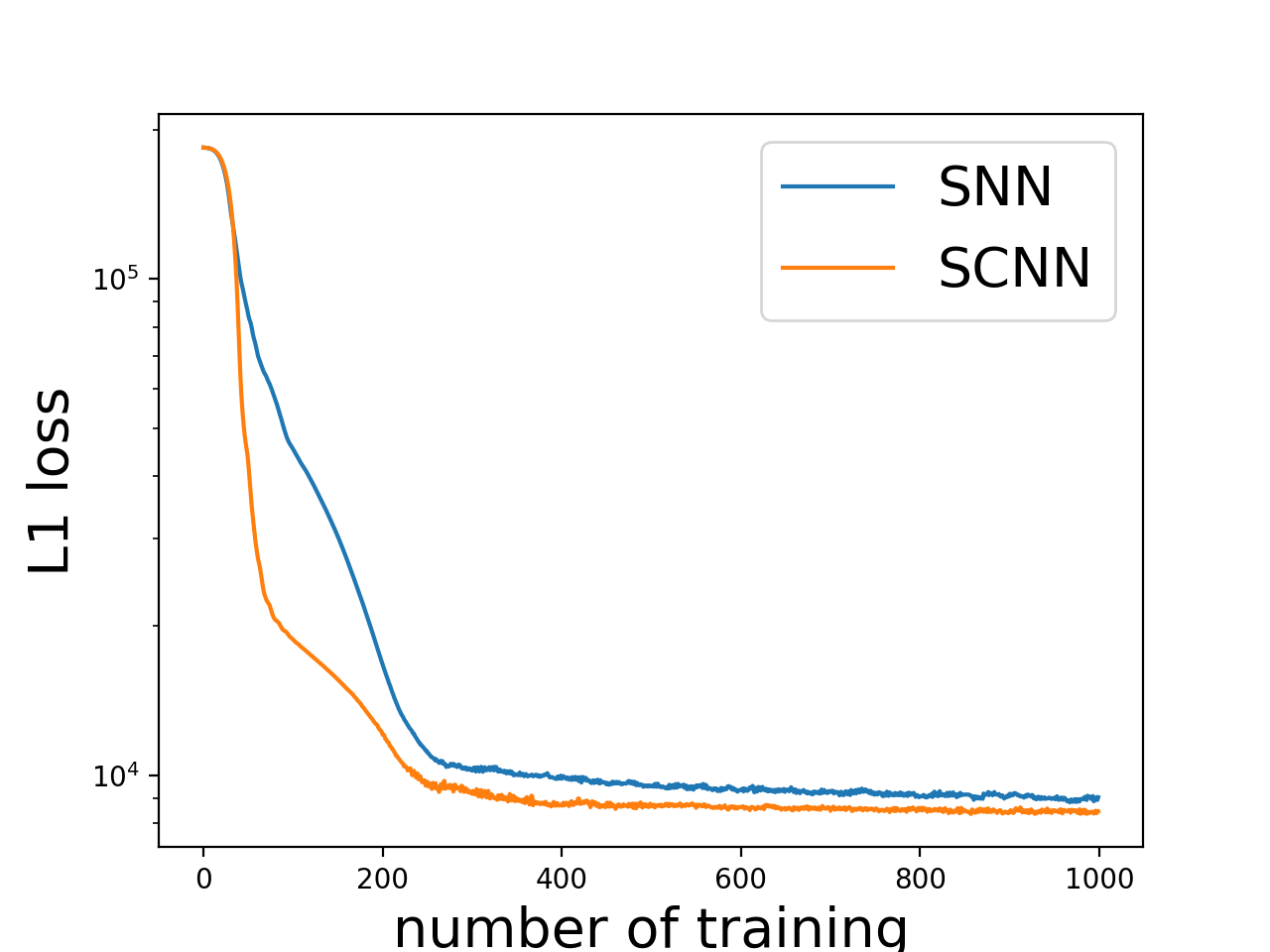}}
  \subfloat[$20\%$ missing rate.\label{fig:training_loss_20}]{%
        \includegraphics[width=0.48\linewidth]{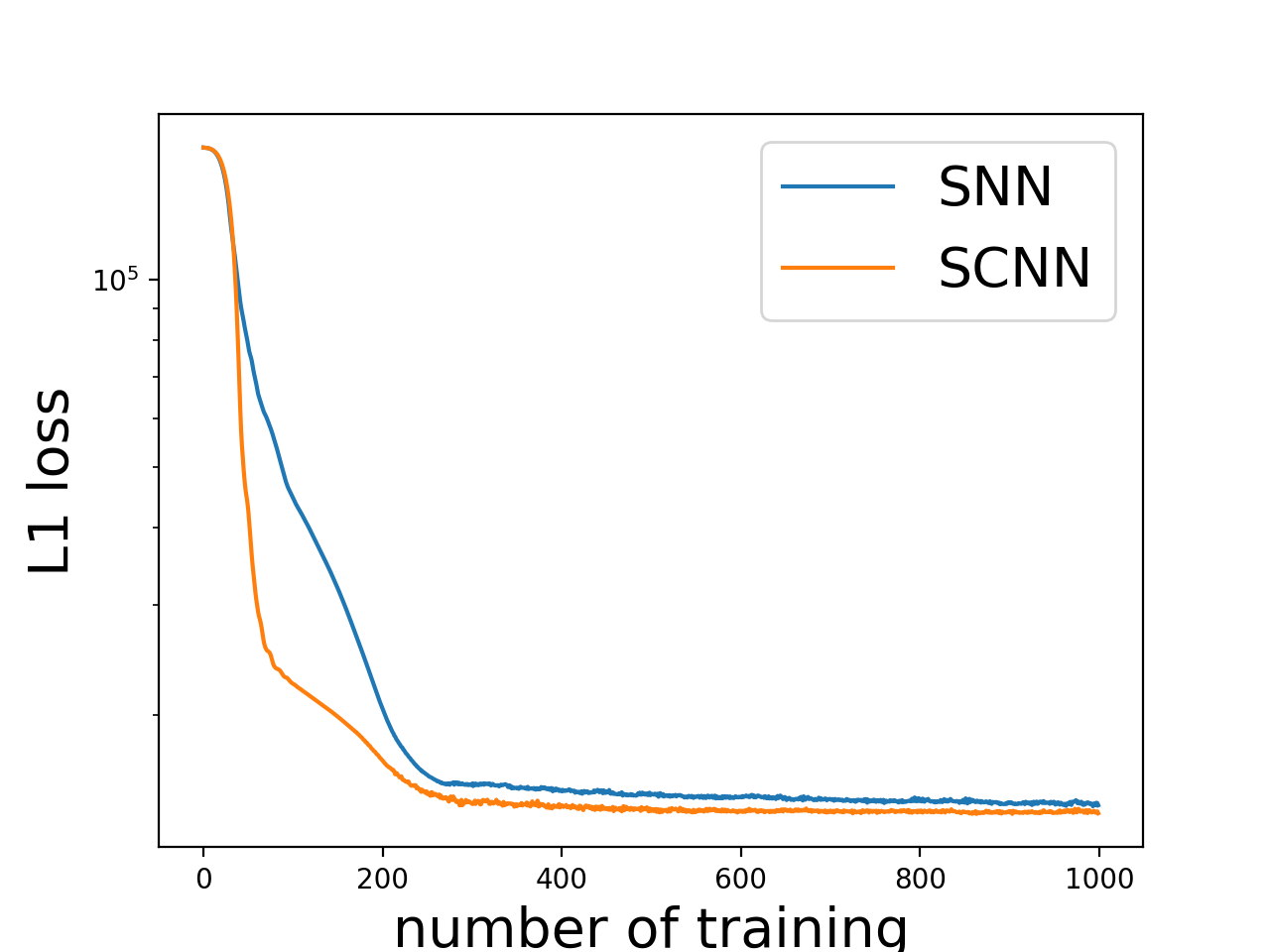}}
    \vspace{-3mm}
    \caption{Training loss of SNN and SCNN. We see that our SCNN converges much faster in the early training stage and results in a smaller training loss.}
    \label{fig:2}
    \vspace{-5mm}
\end{figure}

Via the eigendecomposition of $\bbL_1$, we can analyze the proposed SCNNs in the spectral domain. First, the frequency response of a simplicial convolutional filter $\bbH$ is given by
\begin{equation} \label{eq.freq-response}
    \tilde{H}(\lambda_i) = \begin{cases}
    \epsilon, &\text{ for } \lambda_i = 0, \\
     \epsilon + \sum_{l_1 = 1}^{L_1}\alpha_{l_1}\lambda_{i}^{l_1}, &\text{ for }  \lambda_i \in \ccalQ_{\rm{G}}, \\
    \epsilon + \sum_{l_2=1}^{L_2}\beta_{l_2}\lambda_{i}^{l_2}, &\text{ for }  \lambda_i \in \ccalQ_{\rm{C}}, \\
    \end{cases}
\end{equation}
where $\ccalQ_{\rm{G}}$ and $\ccalQ_{\rm{C}}$ respectively collect distinct gradient and curl frequencies. Thus, the $i$th entry of the SFT of the intermediate output $\bbz$ of an SCNN layer can be expressed as $\tilde{z}_i = \tilde{H}(\lambda_i)\tilde{x}_i$.
This spectral analysis shows that the SCNN layers compute the high-level simplicial components as the pointwise multiplication of the input embedding and the simplicial filter frequency response, ultimately respecting the convolution theorem. Furthermore, we have here different frequency responses for the gradient and curl components, which corresponds to the different weights on lower and upper Laplacians in filter $\bbH$. However, for a filter with $\boldsymbol{\alpha} = \boldsymbol{\beta}$ as in \cite{ebli2020simplicial}, this would lead to coupling between independent frequencies, and in turn to a limited learning expressiveness.



\section{Numerical results} \vskip-3mm
We use the SCNN to impute missing citations in a coauthorship complex, in which a paper with $k+1$ authors is represented by a $k$-simplex, and the $k$-simplicial signal is the number of citations of the paper. We followed the steps of \cite{ebli2020simplicial}, which lead to the citation dataset in Table \ref{tab:1}. We compared the SCNN with the SNN  in \cite{ebli2020simplicial} for the $k$-simplicial signals with $k=0,\dots,5$. Missing data are generated randomly on the $k$-simplicial signals at 5 rates, $10\%, 20\%,\dots,50\%$. The input of the SCNNs is the $k$-simplicial signal where missing citations are replaced by the median of known citations. As the SNN in \cite{ebli2020simplicial}, our SCNN has 3 layers with 30 convolutional filters of total length $5$ ($L_1=L_2=2$). We used LeakyReLU for $\sigma(\cdot)$ as in~\cite{ebli2020simplicial} although not odd. The reference orientation didn't seem to have much influence. We used the $\ell_1$ norm to train the NNs over known citations for 1000 iterations using the Adam optimizer with a learning rate $10^{-3}$. 

We report the training loss of two instances in Fig.~\ref{fig:2} and the mean accuracy\footnote{A citation value is considered to be correct if the imputed value is within $\pm5\%$ of the true value.} $\pm$ the standard deviation in Table~\ref{tab:1} over 10 different experiments. The proposed SCNN approach achieves a smaller training loss and a faster convergence than the SNN (Fig. \ref{fig:2}) due to its better expressive power. From Table~\ref{tab:1}, we observe that both NNs perform similarly for dimensions 0 and 1. This is because for the former, the two NNs are the same, and for the latter, the data dimension is rather small. However, the SCNN gives consistently $1-2\%$ better accuracies for $k\geq2$ with larger data dimensions. \vskip-4mm

\begin{table}[!t]
\vspace{-5mm}
\caption{Imputation accuracies for each dimension and missing rate by SNN (first rows) and SCNN (second rows).}
\centering
\resizebox{1.04\columnwidth}{!}{
{\scriptsize
\setlength{\tabcolsep}{0.35em}
\begin{tabular}{c|ccccccc}
\thickhline
\rowcolor{Gray} Order & 0 & 1 & 2 & 3 & 4 & 5  \\  
\rowcolor{Gray} $N_k$ & 352 & 1474 & 3285 & 5019 & 5559 & 4547  \\
\thickhline
$10\%$ & $0.91 \pm 0.003$  & $0.91 \pm 0.002$ & $0.90 \pm 0.004$ & $0.91 \pm 0.004$ & $0.90 \pm 0.016$ & $0.90 \pm 0.008$ \\ 
 $10\%$ & $0.91 \pm 0.004 $ & $0.91 \pm 0.002 $ & $0.91 \pm 0.002$ & $0.92 \pm 0.001$ & $0.92 \pm 0.002$ & $0.92 \pm 0.002$  \\ \hline
\rowcolor{Gray} $20\%$ & $0.81 \pm 0.006$ & $0.82 \pm 0.003$ & $0.82 \pm 0.005$ & $0.83 \pm 0.004$ & $0.82 \pm 0.012$ & $0.83 \pm 0.007$ \\ 
\rowcolor{Gray} $20\%$ & $0.81 \pm 0.007$ & $0.82 \pm 0.003$ & $0.83 \pm 0.003$ & $0.83 \pm 0.002$ & $0.84 \pm 0.002$ & $0.84 \pm 0.002$\\ \hline
$30\%$  & $0.72 \pm 0.006$ & $0.73 \pm 0.004$ & $0.73 \pm 0.005$ & $0.75 \pm 0.002$ & $0.75 \pm 0.002$ & $0.75 \pm 0.003$ \\ 
$30\%$  & $0.72 \pm 0.005$ & $0.73 \pm 0.004$ & $0.74 \pm 0.003$ & $0.75 \pm 0.002$ & $0.76 \pm 0.002$ & $0.77 \pm  0.002$\\ \hline
\rowcolor{Gray} $40\%$  & $0.63 \pm 0.007$ & $0.64 \pm 0.003$ & $0.65 \pm 0.003$ & $0.66 \pm 0.004$ & $0.67 \pm 0.009$ & $0.67 \pm 0.008$\\ 
\rowcolor{Gray} $40\%$  & $0.63 \pm 0.006$ & $0.64 \pm 0.003$ & $0.65 \pm 0.002$ &  $0.66 \pm 0.002$ &  $0.67 \pm 0.003$ & $0.69 \pm 0.002$\\ \hline
$50\%$  & $0.54 \pm 0.007$ & $0.55 \pm 0.005$ & $0.56\pm0.003$ & $0.57 \pm 0.003$ & $0.59 \pm 0.004$ & $0.60 \pm 0.005$  \\ 
$50\%$  & $0.54 \pm 0.006$ & $0.55 \pm 0.004$ & $0.56 \pm 0.003$ & $0.58 \pm 0.003$ & $0.59 \pm 0.003$ &  $0.61 \pm 0.002$ \\ \hline
\thickhline
\end{tabular}
}
}
\label{tab:1}
\vspace{-5mm}
\end{table}

\section{Conclusion} \vskip-3mm
This paper proposed a simplicial convolutional neural network architecture to learn from data defined on higher-order structures of a network, i.e., simplices in a simplicial complex. We built an SCNN layer through a composition of a simplicial filter and an elementwise nonlinearity. Due to the use of an advanced simplicial filter, our SCNN is able to learn from simplicial neighbours over multiple hops and process simplicial subcomponents (e.g., gradient and curl) independently, compared with the current solutions. The proposed SCNN applies to any $k$-simplicial signal case. We showed the SCNN is equivariant to permutations in the topology and to orientations in the flows, which allows it to exploit symmetries in a simplicial complex. In the future, we plan to extend the SCNN to include also the data on adjacent simplices.

\bibliographystyle{IEEEtran}
\bibliography{refs}

\end{document}